\title{Multi-Band Variable-Lag Granger Causality: A Unified Framework for Causal Time Series Inference across Frequencies}
\author{
    Chakattrai Sookkongwaree\textsuperscript{\rm 1},
    Tattep Lakmuang\textsuperscript{\rm 2},
    Chainarong Amornbunchornvej\textsuperscript{\rm 3}
}
\begin{document}

\maketitle

\begin{abstract}
Understanding causal relationships in time series is fundamental to many domains, including neuroscience, economics, and behavioral science. Granger causality is one of the well-known techniques for inferring causality in time series. Typically, Granger causality frameworks have a strong fix-lag assumption between cause and effect, which is often unrealistic in complex systems. While recent work on variable-lag Granger causality (VLGC) addresses this limitation by allowing a cause to influence an effect with different time lags at each time point, it fails to account for the fact that causal interactions may vary not only in time delay but also across frequency bands. For example, in brain signals, alpha-band activity may influence another region with a shorter delay than slower delta-band oscillations. In this work, we formalize Multi-Band Variable-Lag Granger Causality (MB-VLGC) and propose a novel framework that generalizes traditional VLGC by explicitly modeling frequency-dependent causal delays.  We provide a formal definition of MB-VLGC,  demonstrate its theoretical soundness, and propose an efficient inference pipeline.   Extensive experiments across multiple domains demonstrate that our framework significantly outperforms existing methods on both synthetic and real-world datasets, confirming its broad applicability to any type of time series data. Code and datasets are publicly available.
\end{abstract}

\section{Introduction}

\begin{figure*}[t]
\centering
\includegraphics[width=1.5\columnwidth]{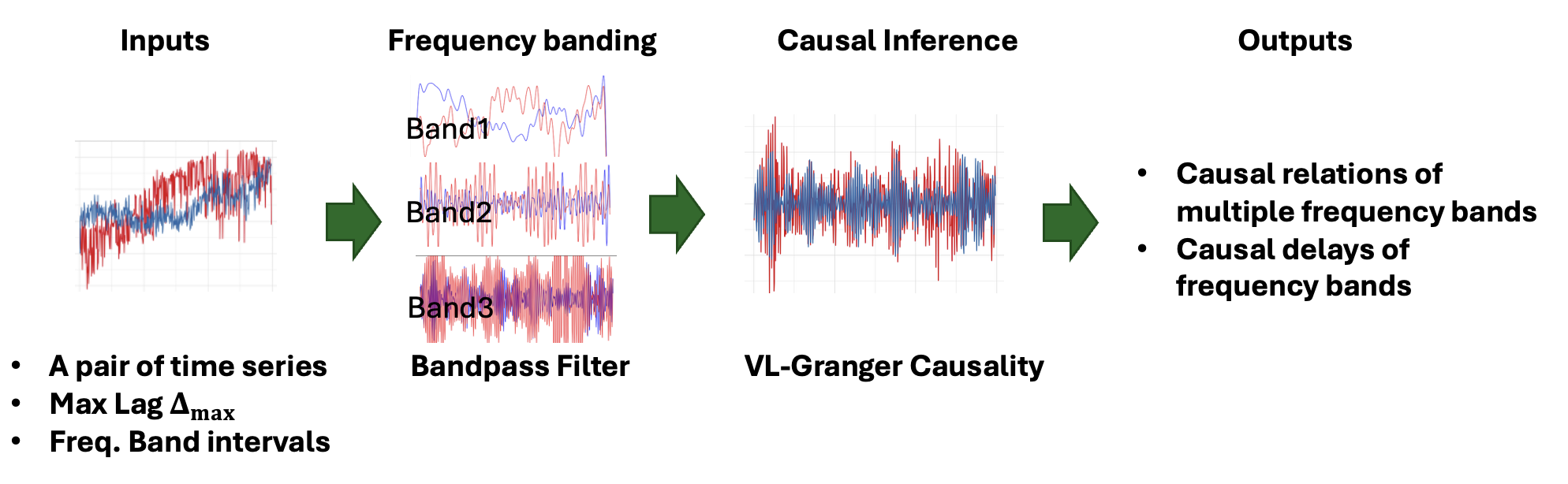} 
\caption{A high-level overview of proposed framework. Given a pair of time series, $\Delta_{max}$, and band intervals as inputs, there are three steps in the framework: Band Decomposition, VL-Granger test for each band, and result integration.}
\label{fig:overviewframework}
\end{figure*}



Understanding causal relationships in time series data is fundamental across numerous scientific disciplines, including neuroscience, economics, and behavioral science. One of important questions in these domains is identifying which time-dependent signals initiate or influence other patterns of behavior over time. Experimental methods such as randomized controlled trials, in many cases, are infeasible due to ethical, logistical, or financial constraints~\cite{varian2016causal}. Consequently, methods for inferring causality from observational time series data have become crucial roles for scientific inquiry. 

One of the well-known frameworks for such analysis is Granger causality~\cite{granger1969investigating}. It defines a directional causal influence from a time series $X$ to another time series $Y$ if the inclusion of $X$'s past improves the prediction of $Y$ beyond the predictive power of $Y$'s own past. Despite its popularity, traditional Granger causality methods rely on a critical assumption: the time delay between cause and effect is fixed. However, in many phenomena, the delay between a cause and its effect may vary over time, rendering the fixed-lag assumption overly restrictive.

To address this issue, the concept of Variable-Lag Granger Causality (VLGC)~\cite{10.1145/3441452} was introduced. VLGC allows for temporal flexibility by accommodating time-varying delays in causal relationships using Dynamic Time Warping (DTW)~\cite{1163055}. Nevertheless, VLGC overlooks a crucial aspect of frequency structures in signals. For example, in neuroscience, causal effects often vary across frequency bands~\cite{canolty2010functional}; for instance, alpha-band oscillations in the brain may exert influence more rapidly than slower delta-band activity~\cite{michalareas2016alpha}.

Hence, in this work, we formalize \textit{Multi-Band Variable-Lag Granger Causality (MB-VLGC)}, a unified framework that generalizes VLGC by modeling frequency-specific time delays. In addition to standard Granger causality and variable-lag methods, our framework offers:

\begin{itemize}
    \item \textbf{Unified time-frequency causal inference:} MB-VLGC extends both Variable-Lag Granger Causality (VLGC) and frequency-domain GC by combining spectral decomposition with dynamic temporal alignment.
    \item \textbf{Frequency-specific delay modeling:} Our framework infers causal interactions with distinct time lags across multiple frequency bands.
\end{itemize}

MB-VLGC reveals multiscale causal structures and offers a general tool for analyzing complex time series across scientific domains.

\subsection{Related works}
Granger causality (GC) is a widely adopted statistical framework for time series causality, establishing that a signal $X$ is causal to $Y$ if the past of $X$ improves the prediction of $Y$ beyond what $Y$’s own past can provide~\cite{granger1969investigating}. While GC is widely used in economics and science, GC is simple, which make it have many limitations. 

There were many methods were developed to overcome several issues of GC. For instance, since GC does not indicate how an influence may be distributed across frequencies, frequency-domain extensions of GC, such as Geweke’s spectral causality \citep{Geweke1982} and Hosoya’s decomposition \citep{hosoya1991decomposition}, were introduced and they enable analysis of frequency-specific causal interactions. For linearity aspect of GC, methods (e.g. Transfer Entropy (TE)~\cite{Schreiber2000,BEHRENDT2019100265}, PCMCI~\cite{runge2020discovering}) were developed to capture non-linear dependencies. Nevertheless, all methods mentioned above relied on fixed-lag conditionals;  they lack the capacity to explicitly model time-varying delays.

Regarding of time-varying delays, in many complex systems, the delay of influence can vary over time, violating this fixed-lag assumption. To relax this constraint, the work in~\cite{10.1145/3441452} proposed the concept of Variable-Lag Granger Causality (VLGC) and Variable-Lag Transfer Entropy (VLTE), which allow flexible, time-varying delays by aligning time series via DTW. VLGC and VLTE generalize classical Granger causality by permitting different lag lengths at different times, thereby capturing causal relationships that shift in time. Both approaches are also able to deal with non-stationary and unstable-causal-structure time series\footnote{Given $X$ causes $Y$,  the unstable-causal-structure occurs when $\exists t\neq t'$ and $Y_t=f(X), Y_{t'}=f'(X)$ s.t. $ f\neq f'$}. However, existing GC methods (including VLGC and frequency-domain approaches) do not account for frequency-specific variations in causal influence.

In other words, all previous works either infer variable lags or frequency-specific effects, but not both. Hence, we propose MB-VLGC  to fill this gap by unifying dynamic temporal alignment with spectral decomposition, enabling accurate causal inference across time and frequency.
\newtheorem{definition}{Definition}
\newtheorem{theorem}{Theorem}[section]
\newtheorem{lemma}[theorem]{Lemma}
\newtheorem{proposition}[theorem]{Proposition}
\newtheorem{corollary}[theorem]{Corollary}

\newenvironment{proof}[1][Proof]{\begin{trivlist}
\item[\hskip \labelsep {\bfseries #1}]}{\end{trivlist}}
\newenvironment{example}[1][Example]{\begin{trivlist}
\item[\hskip \labelsep {\bfseries #1}]}{\end{trivlist}}
\newenvironment{remark}[1][Remark]{\begin{trivlist}
\item[\hskip \labelsep {\bfseries #1}]}{\end{trivlist}}

\newcommand{\qed}{\nobreak \ifvmode \relax \else
      \ifdim\lastskip<1.5em \hskip-\lastskip
      \hskip1.5em plus0em minus0.5em \fi \nobreak
      \vrule height0.75em width0.5em depth0.25em\fi}

\section{Multi-Band Variable-Lag Granger Causality formalization}

\begin{definition}{Granger causality under stationary and VAR assumption}\\
 Let $U_t:=[X_t,Y_t]^T$ be a zero-mean, stationary, multivariate VAR process: $U_t = \sum_{k=1}^{p}{A_kU_{t-k}}+\varepsilon_t$, where $\varepsilon_t\sim iid(0,\Sigma) $ Let $G(z)$ be a stable, invertible multivariate linear filter with no cross-component coupling. Then for all $\omega \in [0,\pi]$, the spectral Granger causality from $Y\rightarrow X$ is invariant under filtering: $f_{Y\rightarrow X}(\omega)=f_{\tilde{Y}\rightarrow \tilde{X}}(\omega)$  
\end{definition}
As shown in Barnett (2011), under the assumptions of a stationary and a VAR, Granger causality is invariant under filtering. Given that filtering does not facilitate the separation of Granger causality by frequency, as described in Definition 1. In this work, we relax the assumptions of stationary and VAR, and investigate the properties of Granger causality under filtering.
\begin{proposition}
If the stationary and/or VAR assumption is dropped, then in general $f_{Y\rightarrow X}(\omega)\neq f_{\tilde{Y}\rightarrow \tilde{X}}(\omega)$: cannot conclude that the filtering does not help to separate Granger causality by frequency.
\end{proposition}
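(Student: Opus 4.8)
The statement negates a universal claim, so the plan is to establish it by exhibiting an explicit counterexample: a bivariate process that violates stationarity (or the VAR form) together with a stable, invertible, diagonal filter $G(z)$ for which the spectral Granger causality genuinely changes. First I would revisit the mechanism behind the invariance asserted in Definition 1. In the stationary VAR case the filtered process satisfies $\tilde{S}(\omega)=G(\omega)\,S(\omega)\,G(\omega)^{*}$ for a single, time-invariant cross-spectral density $S(\omega)$, and because $G$ is diagonal its gains enter the numerator and denominator of Geweke's causality ratio symmetrically and cancel. My first step is therefore to isolate precisely where the two assumptions are used: both are needed to guarantee (i) that a single time-invariant spectral density and its minimum-phase factorization exist, and (ii) that the diagonal gains factor out cleanly from the causality measure. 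This pinpoints the place where relaxing the hypotheses can break the cancellation.

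Next I would construct the counterexample. The natural candidate is a piecewise-stationary (regime-switching) or locally stationary process $U_t=[X_t,Y_t]^{T}$ whose causal frequency content differs across time: on one regime $X$ drives $Y$ predominantly in a low band, on another in a high band. For such a process the appropriate object is a time-varying (evolutionary) spectrum $S_t(\omega)$ rather than a single $S(\omega)$, and an aggregate spectral Granger causality must be defined as a suitable average over time. I would then apply a diagonal band-pass filter $G(z)$ satisfying all the hypotheses of Definition 1 except that the underlying process is no longer stationary, and track how the aggregated causality at a fixed $\omega$ is reweighted. Because the filter interacts with the time-varying structure before the averaging step, the gains no longer factor through, so I expect $f_{Y\rightarrow X}(\omega)\neq f_{\tilde{Y}\rightarrow \tilde{X}}(\omega)$ in exactly the bands where the causal content is concentrated.

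The hard part will be making the non-stationary notion of spectral Granger causality precise enough that the inequality is a genuine phenomenon and not an artifact of an ambiguous definition. I would fix one concrete, operational definition (for instance a time-localized Geweke measure integrated against a window, or the limiting value of the measure computed on each stationary segment) and verify that it collapses to the standard definition in the stationary VAR limit, so that the comparison against Definition 1 is fair. With the definition fixed, the remaining work is routine: evaluate the measure on each regime, apply the diagonal filter, and exhibit a single $\omega$ and a single admissible $G$ at which the two values differ. A transparent instance, namely two regimes carrying causal power in disjoint bands together with a band-pass $G$ selecting one of them, suffices to witness the strict inequality and, in doing so, motivates the frequency-resolved, variable-lag treatment developed in the remainder of the paper.
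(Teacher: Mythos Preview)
Your strategy is sound but takes a different and considerably more elaborate route than the paper. The paper's counterexample is a two-line construction: take $X_t$ to be a random walk, $X_t=X_{t-1}+\varepsilon^x_t$, and $Y_t=\varepsilon^y_t$ independent white noise, so that $U_t$ is nonstationary (unit root in $X$) and not covered by Definition~1. The filter is the first-difference operator $G(L)=1-L$, which is diagonal, stable, and invertible away from $z=1$. On the raw series the spurious-regression phenomenon of Granger and Newbold (1974) is invoked to give $f_{Y\to X}(\omega)>0$, whereas after differencing $\tilde X_t=\varepsilon^x_t$ is i.i.d.\ and uncorrelated with every lag of $\tilde Y_t$, so $f_{\tilde Y\to\tilde X}(\omega)=0$. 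That finishes the argument with no need to introduce a time-varying spectral measure. Your regime-switching construction with a band-pass $G$ is closer in spirit to the multi-band framework the paper is heading toward and would arguably be a more informative counterexample, but the ``hard part'' you flag---fixing a precise nonstationary notion of spectral Granger causality---is precisely what the paper sidesteps by reaching for the textbook unit-root/differencing example and citing Granger--Newbold in place of a formal computation.
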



\begin{proof}
\indent Let $X_t := X_{t-1} + \varepsilon_t^x$, $Y_t := \varepsilon^y_t$  where $\varepsilon_t^x,\varepsilon^y_t \sim iid\mathcal{N}(0,1)$ This is a nonstationary process, not representable by a stationary VAR process with properties $E[X_t] = E[X_{t-1}] + E[\varepsilon_t^x]=E[X_{t-1}]$ which is constant and $Var(X_t) = Var(X_{t-1}) +1$ then $Var(X_t)=t$. 
\\Define the filter $G(L):=1-L$, where $L$ is a lag operator, which is linear, time invariant, diagonal, stable, and invertible in $\mathbb{Z}/\{1\}$ and define $\tilde{X_t}:= G(L)X_t=X_t-X_{t-1}=\varepsilon^x_t$ and $\tilde{Y_t} :=G(L)Y_t=\varepsilon_t^y-\varepsilon^y_{t-1}$ So: $\tilde{x_t}\sim iid\mathcal{N}(0,1)$ and $\tilde{y_t}\sim MA(1)$. We knew from the \cite{granger1974spurious} that $f_{Y \rightarrow X}(\omega)>0$ from a spurious Granger causality. And from $Cov(\tilde{X_t},\tilde{Y}_{t-k})=0, \forall k \geq 1$ makes $f_{\tilde{Y}\rightarrow \tilde{X}}(\omega)=0$. So, we cannot conclude that filtering does not help separate Granger-causality by frequency.
\end{proof}
According to Proposition 2.1, when the assumptions of stationary and a VAR representation are relaxed, filtering may facilitate the separation of Granger causality by frequency. To investigate this, we use the Variable-Lag Granger Causality (VLGC) method described in Definition 2, using a base case in which the signals are decomposed into separate frequency bands: $X = X_1 + X_2$ and $Y = Y_1 + Y_2$. To validate the reliability of the VL-Granger causality test on these separated bands, we analyze the associated residual variances, as discussed in Proposition 2.2.

\begin{definition}{VL-Granger Causality}
\label{def:VLGC}
\indent \\ For time series $X$ and $Y$, and an upper bound of time lag on any two time series to be tested or the maximum time lag $ \delta_{max} \in \mathbb{N}$. The residual of the regression $r^*_{YX}$ can be defined as follow: $$r^*_{YX}(t)=Y(t)-\sum^{\delta_{max}}_{i=1}(a_iY(t-i)+b_iX(t-i)+c_iX^*(t-i))$$ Where $X^*(t-i) = X(t-i+1-\Delta_{t-i+1})$ with a time delay parameter determined by the optimal alignment path $P^*$ that minimize the regression residual between $X$ and$Y$, $\Delta_t > 0$. The coefficients $a_i,b_i$ and $c_i$ are estimated such that the residuals $r_Y,r_{YX},$ and $r^*_{YX}$ are minimized. We can conclude that $X$ VL-Granger causes $Y$ if and only if $Var(r^*_{YX})$ is less than both $Var(r_{Y})$ and $Var(r_{YX})$.
\end{definition}
Next, we extend the concept of VLGC in Def.~\ref{def:VLGC} to work on multiple frequency bands.

\begin{definition}{Multi-Band Variable-Lag Granger Causality (MB-VLGC)}
    Given $X,Y$ and a set of interval bands $\mathcal{B}=\{B_1,\dots,B_k\}$ s.t. $Bi=[\omega,\omega']$ is a frequency band start at $\omega$ and end at $\omega'$. We say $X$ Multi-Band Variable-Lag Granger causes $Y$ if $\exists B_i\in\mathcal{B}$, $X^{(B_i)}$ VL-Granger causes $Y^{(B_i)}$ where $(X^{(B_i)},Y^{(B_i)})$ is a pair of signals band-limited at band $B_i$. 
\end{definition}

In the next preposition, we show that the variance of residuals of MB-VLGC has the variance of residuals of traditional VLGC as its upper bound and it can be lower than VLGC's for some cases. 

\begin{proposition}
Let $r^*_{(i)}$ be the residual from fitting $Y^{(i)} \sim VL-G(X^{(i)};P^{(i)})$ and $r^*$ be the residual from fitting a single VL-Granger model to $(X,Y)$, with alignment $P$. Then, $Var(r^*)\geq Var(r^*_{(1)}) + Var(r^*_{(2)})$
\end{proposition}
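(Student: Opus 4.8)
The plan is to combine three facts: the linearity of the VL-Granger predictor in Definition~\ref{def:VLGC}, which lets me split the single full-model residual into two band-restricted pieces; the spectral orthogonality of disjoint frequency bands, which removes the cross term when I expand the variance; and the optimality of each per-band fit, which dominates the band-restricted pieces produced by the shared model. Throughout I work with the two-band decomposition $X = X_1 + X_2$ and $Y = Y_1 + Y_2$ introduced before Definition~3, and I treat $Var$ as the squared norm in the Hilbert space of mean-zero signals with inner product $\langle U,V\rangle = Cov(U,V)$.

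First I would write, for the single fixed alignment $P$ used by the full model, the warped signal as $X^*(t) = X_1^*(t) + X_2^*(t)$, where $X_j^*$ is the band-$j$ component reindexed along that same path $P$. Because the predictor $\sum_k\bigl(a_k Y(t-k) + b_k X(t-k) + c_k X^*(t-k)\bigr)$ is linear in its inputs and the coefficients $(a_k,b_k,c_k)$ are shared across bands, the full residual splits additively as $r^* = r_1 + r_2$, where $r_j(t) = Y_j(t) - \sum_k\bigl(a_k Y_j(t-k) + b_k X_j(t-k) + c_k X_j^*(t-k)\bigr)$ is itself a VL-Granger residual of $Y_j$ on $X_j$, evaluated at the shared coefficients and the shared alignment $P$.

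Next I would expand $Var(r^*) = Var(r_1) + Var(r_2) + 2\,Cov(r_1,r_2)$ and argue that the cross-covariance vanishes: each $r_j$ is a finite linear combination of time-indexed band-$j$ signals and is therefore band-limited to $B_j$, so disjointness of $B_1$ and $B_2$ together with Parseval/Wiener--Khinchin gives $Cov(r_1,r_2)=0$ and hence $Var(r^*) = Var(r_1) + Var(r_2)$. Finally, each $r_j$ is a feasible but generally suboptimal band-$j$ residual, since the shared coefficients and the alignment $P$ form only one admissible choice inside the minimization that defines $r^*_{(j)}$, whereas $r^*_{(j)}$ uses band $j$'s own optimal coefficients and alignment $P^{(j)}$. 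Thus $Var(r_j) \ge Var(r^*_{(j)})$ for each $j$, and summing yields $Var(r^*) \ge Var(r^*_{(1)}) + Var(r^*_{(2)})$, which is the claim.

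The main obstacle is the orthogonality step, and specifically the assertion that the warped component $X_j^*$ remains inside band $B_j$. A constant delay is a pure phase shift and preserves spectral support exactly, but a genuinely variable lag is a time-varying reindexing that can smear energy across frequencies, so $Cov(r_1,r_2)$ need not vanish in full generality. I would address this by making the decomposition assumption explicit (ideal, non-overlapping bandpass filters) and restricting attention to the regime where the warp is delay-dominated so that cross-band leakage is negligible. It is worth noting that, since $Var(r_j)\ge Var(r^*_{(j)})$ is already established, the stated inequality follows under the weaker condition $Cov(r_1,r_2)\ge 0$ rather than exact equality; pinning down that this residual cross term is nonnegative (or zero) under the band decomposition is the part that will require the most care.
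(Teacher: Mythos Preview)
Your argument follows the same underlying strategy as the paper's---decompose by frequency band and exploit that a single shared alignment $P$ cannot simultaneously match both band-specific optimal alignments $P^{(1)},P^{(2)}$---but your execution is tighter. The paper writes directly $r^{*}=r^{*}_{(1)}+r^{*}_{(2)}+\eta_t$ with an unexplained remainder $\eta_t$ asserted to be nonnegative, and never treats the cross-covariance between the band pieces; you instead introduce intermediate suboptimal residuals $r_j$ (the full-model coefficients and shared alignment, restricted to band $j$), kill $Cov(r_1,r_2)$ via spectral orthogonality of disjoint bands, and then bound each $Var(r_j)\geq Var(r^{*}_{(j)})$ by the minimization that defines $r^{*}_{(j)}$. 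This cleanly separates the two inequalities and makes the role of band disjointness explicit. You are also right to flag the warping step as the fragile point: a genuinely time-varying delay is not a pure phase shift and can smear spectral support, so $X_j^{*}$ need not remain band-limited and $Cov(r_1,r_2)=0$ is not automatic. The paper's proof glosses over exactly this issue, so your caveat is an improvement rather than a gap relative to the paper.
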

\begin{proof}

\indent We consider two time series $X$ and $Y$, where: $(X^{(1)},Y^{(1)}) $ and $(X^{(2)},Y^{(2)})$ are pairs of signals band-limited to disjoint frequency ranges $\Omega_1$ and $\Omega_2$, with $\Omega_1\cap \Omega_2= \emptyset$ and each pair admit a variable-lag Granger representation via alignment path $P^{(1)},P^{(2)}$ minimizing residuals. Since $r^*_{(i)}$ be the residual from fitting $Y^{(i)} \sim VL-G(X^{(i)};P^{(i)})$ and $r^*$ be the residual from fitting a single VL-Granger model to $(X,Y)$, with alignment $P$. So, we can show $$Y_t = f_1(X_{t-\delta^{(1)}(t)})+f_2(X_{t-\delta^{(2)}(t)})+\varepsilon_t$$ where $\delta^{(i)}(t)$ is a nonlinear time lag, $f_i$ is the optimal regressor under alignment $\delta^{(i)}$, and $\varepsilon_t$ is i.i.d. noise and let $\hat{Y}_t^{(i)}=f_i(X_{t-\delta^{(i)}(t)})$ We fit a path $\delta(t)$ to model: $\hat{Y}_t = g(X_{t-\delta(t)})$. Then, $X_{t-\delta (t)}= X_{t-\delta (t)}^{(1)}+X_{t-\delta (t)}^{(2)}$. So, $X_{t-\delta (t)}^{(1)}+X_{t-\delta (t)}^{(2)} \geq  X_{t-\delta^{(1)} (t)}^{(1)}+X_{t-\delta^{(2)} (t)}^{(2)}$ this means $g$ cannot access both true lags. Hence, $r^* = Y_t -g(X_{t-\delta(t)})=r^*_{(1)}+r^*_{(2)}+\eta_t$. Then, $$Var(r^*)\geq Var(r^*_{(1)}) + Var(r^*_{(2)})$$
Note that: For the not perfect fitting in $(X,Y)$ case, $\eta_t$ keeps positive, since there must be a information missing. It implies $$Var(r^*)> Var(r^*_{(1)}) + Var(r^*_{(2)})$$.
\end{proof}
With the base case proved in Proposition 2.2, Proposition 2.3 generalizes the result to
$n-$band separation, which we state without proof via induction.
\begin{proposition}
$Var(r^*)\geq \sum_{i=1}^n Var(r^*_{(i)}) $
\end{proposition}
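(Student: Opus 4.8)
The plan is to prove the $n$-band bound by induction on $n$, using Proposition 2.2 as the base case and a regrouping argument that reduces each inductive step to the two-band inequality already established. First I would fix notation for partial sums of bands: for $k\leq n$ write $X^{[1:k]} := \sum_{i=1}^{k} X^{(i)}$ and $Y^{[1:k]} := \sum_{i=1}^{k} Y^{(i)}$ for the signal obtained by superposing the first $k$ band-limited components, and let $R_k$ denote the variance of the residual from fitting a single VL-Granger model to the pair $(X^{[1:k]}, Y^{[1:k]})$. Since the bands are pairwise disjoint, $(X^{[1:k]}, Y^{[1:k]})$ is itself a genuine $k$-band signal, and $R_n = Var(r^*)$ recovers the full single-model residual variance.

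The base case $n=2$ is precisely Proposition 2.2, which gives $R_2 \geq Var(r^*_{(1)}) + Var(r^*_{(2)})$. For the inductive step I would assume the inequality holds for every decomposition into $n$ disjoint bands and consider a signal decomposed into $n+1$ disjoint bands. Partitioning the spectrum into the composite range $\Omega_1 \cup \cdots \cup \Omega_n$ and the single band $\Omega_{n+1}$, these two frequency ranges are disjoint, so Proposition 2.2 applies to the pair consisting of the aggregate $(X^{[1:n]}, Y^{[1:n]})$ and the component $(X^{(n+1)}, Y^{(n+1)})$, yielding
$$Var(r^*) = R_{n+1} \geq R_n + Var(r^*_{(n+1)}).$$

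It then remains to bound $R_n$, which I would do by applying the inductive hypothesis to the $n$-band signal $(X^{[1:n]}, Y^{[1:n]})$, giving $R_n \geq \sum_{i=1}^{n} Var(r^*_{(i)})$; substituting this into the previous display produces
$$Var(r^*) \geq \sum_{i=1}^{n} Var(r^*_{(i)}) + Var(r^*_{(n+1)}) = \sum_{i=1}^{n+1} Var(r^*_{(i)}),$$
which closes the induction. The step requiring the most care is the regrouping itself: one must check that the union of the first $n$ bands can simultaneously be treated as a single frequency range (so that Proposition 2.2 legitimately splits off $\Omega_{n+1}$) and as a genuine $n$-band signal (so that the inductive hypothesis applies to it). Pairwise disjointness of the bands is exactly what makes both views consistent, so this is where I would spend the verification effort. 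A minor additional remark is that the information-loss term $\eta_t$ isolated in the proof of Proposition 2.2 only strengthens each inequality, turning $\geq$ into $>$ whenever the aggregate fit is imperfect; hence no strictness is required to obtain the stated non-strict bound.
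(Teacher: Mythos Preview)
Your proposal is correct and follows exactly the approach the paper indicates: the paper explicitly states that Proposition 2.3 is obtained from the base case of Proposition 2.2 ``without proof via induction,'' and your regrouping argument is precisely the standard way to carry out that induction. Your discussion of the regrouping step (treating $\Omega_1\cup\cdots\cup\Omega_n$ both as a single frequency range disjoint from $\Omega_{n+1}$ and as an $n$-band signal) and of the strict-inequality refinement via $\eta_t$ goes beyond what the paper spells out but is entirely consistent with it.
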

In conclusion, we can say that by dropping a stationary and a VAR assumption, the VL-Granger causality test can be applied on the multi-decomposed bands of the time series. In the case of not perfect fitting in $(X,Y)$ ($\eta_t>0$), $Var(r^*)> \sum_{i=1}^n Var(r^*_{(i)}) $, which implies MB-VLGC performs better than VLGC.  

\section{Methods}
\subsection{Frequency-Band VL-Granger Framework Architecture}

Fig~\ref{fig:overviewframework} shows the high-level overview of our framework.  Our key idea lies in the systematic application of VLGC to frequency-decomposed signals, enabling frequency-specific causality detection that had not been done with previous methods constrained by filter invariance. This allows detection of causal relationships with different temporal delays across frequency bands and frequency-specific causality masked in broadband analysis—capabilities essential for understanding multi-timescale biological and physical processes.

\subsubsection{Framework Pipeline}
There are three steps in the pipeline and the details of each step are below.

\textbf{Stage 1: Frequency Banding.} Input time series are decomposed using bandpass filters into frequency-specific components for each target band. This stage needs to preserves temporal relationships while achieving clean frequency separation essential for reliable causality detection.

\textbf{Stage 2: Causal Inference.} VLGC analysis is applied independently to each frequency band, leveraging the theoretical capability of VLGC to operate on filtered signals without losing causal information. Each band yields detection statistics, lag estimates, and confidence measures.

\textbf{Stage 3: Result Integration.} Band-specific results statistic are systematically combined using established method  and cross-band consistency assessment, producing both overall causality decisions and frequency-specific insights.

\subsubsection{Band decomposition}

As temporal preservation is needed for Granger Causality because it relies on using past observation to predict future values. Zero-phase filtering was implemented using the \texttt{filtfilt} function, which performs forward and backward filtering to eliminate phase distortion \cite{gustafsson1996determining} and preserve its temporal properties.

We employed 4th-order Butterworth bandpass filters due to their maximally flat frequency response in the passband, ensuring minimal amplitude distortion of signal components \cite{oppenheim1999discrete}.  Studies examining neural connectivity consistently employ Butterworth filters for frequency band isolation due to their minimal artifacts and their maximally flat frequency response in the passband \cite{bastos2015tutorial, cohen2014analyzing}.

\subsection{Per-Band Causality Analysis}
\label{subsec:vl_granger}

Our frequency-band method builds upon VLGC \cite{10.1145/3441452}, which extends traditional Granger causality to handle time-varying delays between cause and effect. While traditional Granger causality assumes fixed temporal lags, VLGC allows causal relationships to exhibit variable delays that change dynamically over time, making it more suitable for real-world phenomena where causal timing is not constant.

\subsubsection{Dynamic Time Warping Alignment}

VLGC addresses variable delays by using Dynamic Time Warping (DTW) to find optimal temporal alignments between the cause time series $X$ and effect time series $Y$. DTW identifies the warping path $P^* = (\Delta_1, \Delta_2, ..., \Delta_T)$ that minimizes the cumulative distance between aligned elements:
$$P^* = \arg\min_P \sum_{t} D(X(t-\Delta_t), Y(t))$$
where $\Delta_t$ represents the time delay at step $t$, and $D(\cdot,\cdot)$ is the distance function between aligned points.

\subsubsection{Variable-Lag Regression Framework}

Using the optimal warping path, VLGC constructs time-aligned predictors for regression analysis. The variable-lag causality test compares three nested models:

\textbf{Null Model ($H_0$):} $Y(t) = \sum_{i=1}^{\delta_{max}} a_i Y(t-i) + \epsilon_Y(t)$
\\ \\
\textbf{Fixed-Lag Model ($H_1$):} $Y(t) = \sum_{i=1}^{\delta_{max}} a_i Y(t-i) + \sum_{i=1}^{\delta_{max}} b_i X(t-i) + \epsilon_{YX}(t)$
\\ \\
\textbf{Variable-Lag Model ($H_2$):} $Y(t) = \sum_{i=1}^{\delta_{max}} a_i Y(t-i) + \sum_{i=1}^{\delta_{max}} c_i X^*(t-i) + \epsilon_{VL}(t)$
\\ \\
where $X^*(t-i) = X(t-i+1-\Delta_{t-i+1})$ represents the DTW-aligned version of $X$, and $\delta_{max}$ is the maximum lag considered.

\subsubsection{Hybrid Lag Selection Strategy}

VLGC employs a hybrid approach combining cross-correlation and DTW for optimal lag selection at each time point. The process begins with cross-correlation analysis to identify the globally optimal delay:

$$opt_{delay} = \arg\max_{\tau} |CCF(\tau)| $$ $$= \arg\max_{\tau} \left|\frac{\sum_{t} (X(t-\tau) - \bar{X})(Y(t) - \bar{Y})}{\sqrt{\sum_{t}(X(t) - \bar{X})^2 \sum_{t}(Y(t) - \bar{Y})^2}}\right|$$

where $CCF(\tau)$ is the cross-correlation function at lag $\tau$, and $\bar{X}$, $\bar{Y}$ are sample means.

At each time point $t$, the method selects the lag that minimizes prediction error between candidate alignments:
$chosen_{lag}(t) = \arg\min_{\tau \in \{\tau_{CC}, \tau_{DTW}(t)\}} |Y(t) - X(t - \tau)|$

where $\tau_{CC}$ is the global cross-correlation lag and $\tau_{DTW}(t)$ is the DTW-suggested lag at time $t$. This hybrid strategy combines the global optimality of cross-correlation with the local adaptivity of DTW, yielding time-specific alignments that maximize the causal signal strength while maintaining temporal coherence.

\subsubsection{Statistical Testing Framework}

VLGC employs dual criteria for causality detection:



The method concludes $X$ causes $Y$ if either criterion is satisfied: $p_{F-test} \leq \alpha$ OR $\gamma \geq \gamma_{threshold}$, providing robustness against different signal characteristics and noise conditions.

\subsection{Result integration}

\subsubsection{P-value Combination.} Valid p-values from individual frequency bands are combined using established meta-analysis methods. We employ Fisher's combined probability test as the primary approach:
$$\chi^2 = -2\sum_{i=1}^{k} \ln(p_i)$$
where $k$ is the number of valid frequency bands. Alternative combination methods (Stouffer's method, Bonferroni correction) are also available.

\section{Experimental Evaluation}
\label{sec:experimental}

\subsection{Experimental Setup}
\label{subsec:experimental_setup}

We conducted comprehensive experiments to evaluate our multi-band VL-Granger causality method across two main objectives. First, we tested the method's capability to handle various types of causal relationships. Second, we assessed its performance in identifying causal relationships in real-world datasets. \footnote{All experiments ran on MacBook Air M2,2022 Memory 16 GB SSD 256 GB}

We compared our proposed method against five established causality detection approaches:
\begin{itemize}
    \item \textbf{Variable-lag Granger Causality (VL-GC)} \cite{10.1145/3441452}: The foundational variable-lag method that our approach extends
    \item \textbf{Granger Causality (GC)} \cite{granger1969investigating}: Traditional fixed-lag Granger causality as implemented in standard econometric packages
    \item \textbf{Variable-lag Transfer Entropy (VL-TE)} \cite{10.1145/3441452}: Non-linear extension of transfer entropy with variable-lag capability
    \item \textbf{Transfer Entropy (TE)} \cite{BEHRENDT2019100265}: Standard transfer entropy with fixed-lag assumptions
    \item \textbf{PCMCI+} \cite{runge2020discovering}: State-of-the-art conditional independence-based causality detection method
    \item \textbf{Granger-Geweke (GG)} \cite{farne2022bootstrap}: Spectral decomposition of Granger causality that measures causal strength in the frequency domain using R-CRAN Package \textit{grangers}.
\end{itemize}

For all experiments, we set the significance level $\alpha = 0.01$ and the BIC difference ratio threshold $\gamma = 0.6$ by default, unless explicitly stated otherwise.

\subsection{Datasets}

\subsubsection{Synthetic Datasets}
\label{subsubsec:synthetic_datasets}

We generated a comprehensive synthetic dataset consisting of 240 time series files to systematically evaluate causality detection performance across different scenarios. The dataset is divided into two main categories: datasets with ground truth causality (120 files) and datasets without causality (120 files) for testing false positive rates.

\paragraph{False Positive Test Datasets (120 files):}
Independent time series generated from standard normal distributions with no causal relationships (\textit{Random Noise}).

\paragraph{True Positive Test Datasets (120 files):}
The causality datasets are further divided into four distinct types:

\textbf{Basic Causation (30 files):} Simple linear causality where the effect time series is generated from the cause time series with a fixed lag of 20 samples and random coupling coefficient.

\textbf{Variable-lag Causation (30 files):} The effect follows the cause with discrete lag periods that change over time. The lag switches between three values (12, 16, and 20 samples) with 2-4 regime changes per time series.

\textbf{Broadband Causation (30 files):} Both cause and effect contain rich frequency content spanning 1-50 Hz, with the effect generated by lagging the broadband cause signal by 7 samples. This tests performance on frequency-rich signals where traditional methods may struggle.

\textbf{Multi-frequency Causation (30 files):} Complex scenarios where different frequency components (10, 40, and 80 Hz) exhibit different causal lags (15, 8, and 4 samples respectively). This directly tests the method's ability to detect frequency-specific causal relationships.

All synthetic time series were generated with lengths between 500-2000 samples at a sampling rate of 250 Hz. Random seeds were set for reproducibility, and coupling coefficients were drawn from normal distributions to ensure realistic signal-to-noise ratios. The variable-lag datasets use discrete lag periods rather than continuous variation to create detectable patterns that reflect realistic biological and physical processes. The code for data generation is available in the code and datasets link.

\subsubsection{Real-world Datasets}
\label{subsubsec:realworld_datasets}

We evaluated our method on four established real-world datasets from diverse domains to assess its practical applicability and validate frequency-specific causality detection capabilities. The $X$ represents case and $Y$ represents effect in the setting.

\paragraph{Old Faithful Geyser:} This classic dataset contains eruption duration ($X$) and inter-eruption intervals $Y$ from Old Faithful geyser \cite{azzalini1990look}. We tested causality between consecutive eruption characteristics, where previous eruption duration may influence the next inter-eruption interval. The dataset contains 298 observations and serves as a benchmark for temporal causality methods.

\paragraph{Chicken and Egg Prices:} Economic time series data examining the price relationship between egg $X$ and chicken $Y$ markets. This dataset~\cite{zeileis2002diagnostic} tests our method's ability to detect economic causality relationships that may operate at different frequency scales, from short-term market fluctuations to longer-term supply-demand dynamics. The time series length is 54 time steps.

\paragraph{Gas Furnace:} Industrial process data measuring gas consumption rates $X$ and corresponding CO$_2$ output $Y$~\cite{box2015time}. This controlled system provides a clear causal relationship where gas input drives CO$_2$ production, with the dataset containing 296 time steps. The known causal direction makes this ideal for validation.

\paragraph{EEG Motor Imagery:} Neurophysiological data from the EEG Motor Movement/Imagery Dataset \cite{schalk2004bci2000} available through PhysioNet. We analyzed causality between electrodes FC3 and FC5, which are positioned over motor cortex regions. These electrodes are expected to show coordinated activity during motor tasks, with potential frequency-specific interactions in different neural rhythms (alpha, beta, gamma bands). The high sampling rate (250 Hz) and rich frequency content make this dataset particularly suitable for testing frequency-band causality detection.

These datasets span multiple domains (geophysical, economic, industrial, and neurophysiological) and provide diverse signal characteristics including different sampling rates, noise levels, and frequency content. The neurophysiological data is especially relevant for demonstrating the practical value of frequency-specific causality analysis, as different neural rhythms are known to carry distinct functional information.
\section{Results}

We outline the result of testing our approach against five established methods on both synthetic and real-world datasets.

\subsection{Synthetic Dataset Performance}

\begin{table}[htbp]
\centering
\caption{Performance Comparison Across Dataset Types (Accuracy and F1-Scores). The elements in the table are accuracy in every row ecept the last row that the elements are F1-scores.}
\label{tab:performance_comparison}
\scriptsize
\setlength{\tabcolsep}{3pt}
\begin{tabular}{|p{2.0cm}|c|c|c|c|c|c|c|}
\hline
\multicolumn{1}{|c|}{} & \multicolumn{7}{c|}{\textbf{Accuracies of Methods}} \\
\hline
\textbf{Datasets} & \textbf{MB-VL} & \textbf{VLGC} & \textbf{GC} & \textbf{TE} & \textbf{VLTE} & \textbf{PCMCI+} & \textbf{GG} \\
\hline
Following relation & 0.833 & 0.733 & \textbf{1.000} & 0.367 & 0.933 & \textbf{1.000} & 0.400 \\
\hline
Variable-lag & 0.767 & 0.600 & 0.233 & 0.333 & 0.867 & \textbf{1.000} & 0.467 \\
\hline
Broadband lag & 0.867 & 0.433 & 0.833 & 0.467 & 0.933 & \textbf{0.967} & 0.000 \\
\hline
Multifrequency lag & \textbf{0.933} & 0.167 & 0.267 & 0.567 & 0.533 & 0.900 & 0.133 \\
\hline
Random Noise & 0.750 & \textbf{1.000} & \textbf{1.000} & 0.592 & 0.617 & 0.300 & 0.525 \\
\hline
\hline
\textbf{Overall F1-score} & \textbf{0.810} & 0.742 & 0.792 & 0.512 & 0.717 & 0.725 & 0.388 \\
\hline
\end{tabular}
\end{table}

Our method achieved the highest overall F1-score of 0.810. Most notably, our method excelled on multi-frequency datasets with an accuracy of 0.933, significantly outperforming traditional methods that struggled with frequency-specific causal relationships. This demonstrates the core strength of our approach: the ability to detect causality that operates differently across frequency bands. For the case of positive class (X causes Y),  PCMCI+ performed the best and it performed slightly better than our method (MB-VL). However, it performed worse than our method in the random noise case (X does not cause Y).  The VLGC and GC performed the best in the random noise case but they were unable to deal with many types of lag case especially the Multifrequency lag case. The VLTE performed better than TE and our method in most of the cases but it performed dramaically poor compare to our method in the Multifrequency lag case. Lastly, GG had the worst performance. 

\subsection{Impact of Frequency Band Configuration}

A critical aspect of our method is the selection of frequency bands, which directly impacts performance depending on the underlying signal characteristics. Table \ref{tab:band_strategy} illustrates how different band configuration strategies affect performance.

\begin{table}[htbp]
\centering
\caption{Performance Impact of Band Configuration Strategy}
\label{tab:band_strategy}
\footnotesize
\begin{tabular}{|l|c|c|c|}
\hline
\textbf{Configuration} & \textbf{Strategy} & \textbf{Multi-freq} & \textbf{Overall F1} \\
\hline
Single Band & Broadband analysis & 0.167 & 0.742\\
\hline
Two Bands & Optimal balance & 0.933 & 0.810 \\
\hline
EEG Bands & Frequency-specific & 1.000 & 0.618 \\
\hline
\end{tabular}
\end{table}

The two-band configuration (1-80 Hz and 81-120 Hz) provided the optimal balance across all dataset types, achieving the highest overall F1-score while maintaining excellent performance on multi-frequency datasets. This configuration represents a principled frequency division that separates low-frequency oscillatory dynamics from higher-frequency transient processes.

Interestingly, the EEG-specific six-band configuration achieved perfect performance (1.000) on multi-frequency datasets but lower overall performance (0.618) due to over-segmentation of simpler causal relationships. This illustrates an important principle: Optimal band selection depends on the expected characteristics of the causal relationships and available domain knowledge.

For applications where signal characteristics are unknown, our two-band configuration provides robust performance. However, when domain-specific knowledge is available—such as in EEG analysis where neural oscillations operate in well-defined frequency ranges—informed band selection allows users to investigate frequency-specific causal relationships that correspond to their research questions and domain understanding, revealing underlying mechanisms that would remain hidden in typical causal analysis.

\subsection{Frequency-Specific Lag Detection}

Our method provides insights into the temporal dynamics within different frequency bands beyond causality detection. Table \ref{tab:lag_performance} demonstrates our method's ability to accurately detect causal lags across different neural frequency bands from the multi-frequency causation dataset.

\begin{table}[htbp]
\centering
\caption{Lag Detection Performance by Frequency Band}
\label{tab:lag_performance}
\footnotesize
\setlength{\tabcolsep}{5pt}
\begin{tabular}{|l|c|c|c|c|}
\hline
\textbf{Band} & \textbf{True Lag} & \textbf{Inferred Lag} & \textbf{Sig. Lag} & \textbf{Error} \\
\hline
Alpha & 15 & $12.3 \pm 6.0$ & $11.6 \pm 7.0$ & $1.2 \pm 0.4$ \\
\hline
L Gamma & 8 & $4.3 \pm 3.9$ & $5.0 \pm 0.0$ & $3.0 \pm 0.0$ \\
\hline
H Gamma & 4 & $2.4 \pm 3.3$ & $2.4 \pm 3.3$ & $2.5 \pm 1.8$ \\
\hline
\end{tabular}
\end{table}

The results show that our method successfully detects frequency-specific temporal delays with high accuracy. High gamma frequencies (30-100 Hz) demonstrated the most precise lag detection, closely matching the true lag of 4 samples. Alpha band detection showed more variability, which is consistent with the known properties of alpha oscillations in neural systems. This frequency-specific lag information provides valuable insights that would be lost in traditional broadband analysis.

\subsection{Real-world Dataset Validation}
\begin{figure*}[ht!]
\centering
\includegraphics[width=1.5\columnwidth]{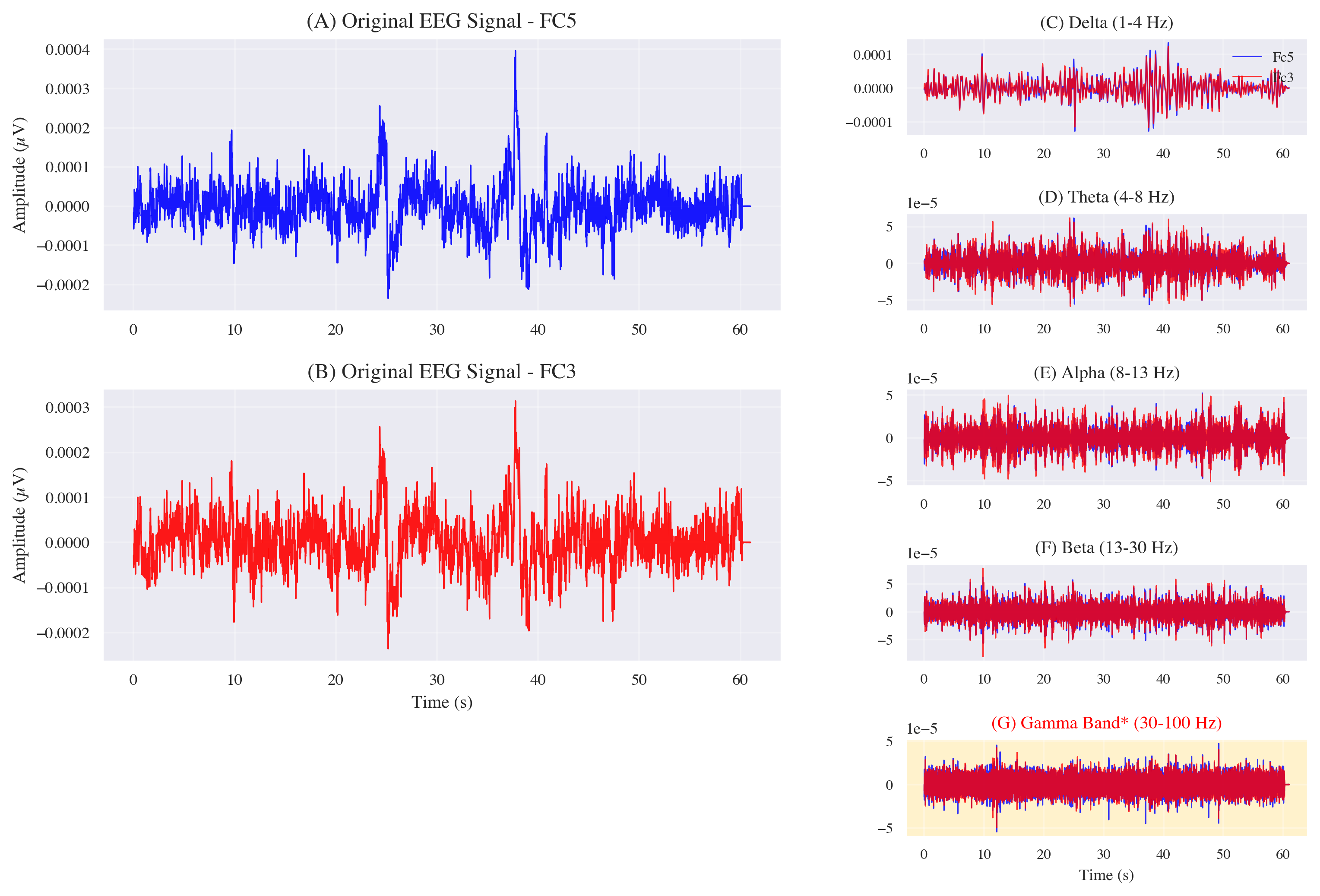} 
\caption{The time series from EEG motor imagery dataset: the red is FC3 and the blue is FC5. (Left) the original time series. (right) the band-limited time series that were separated by frequency bands. Only the gamma band (the right bottom) has the bidirectional VL-Granger causation which is consistent with the ground truth. }
\label{fig:eegRes}
\end{figure*}

To validate practical applicability, we tested our method on four established real-world datasets from diverse domains. Table \ref{tab:realworld_results_minimal} presents the causality detection results, where 1 indicates successful causality detection and 0 indicates no detected causality.

\begin{table}[htbp]
\centering
\caption{Causality Detection Results on Real-world Datasets}
\label{tab:realworld_results_minimal}
\footnotesize
\setlength{\tabcolsep}{3.5pt}
\begin{tabular}{|l|c|c|c|c|c|c|c|}
\hline
& \multicolumn{7}{c|}{\textbf{Methods}} \\
\hline
\textbf{Case} & \textbf{MB-VL} & \textbf{VLGC} & \textbf{G} & \textbf{TE} & \textbf{VLTE} & \textbf{PCMCI} & \textbf{GG}  \\
\hline
EEG & 1 & 0 & 0 & 1 & 1& 1 & 0\\
\hline
Chick. Egg & 1 & 1 & 1 & 1 & 1 & 1 & 1\\
\hline
Old ffg. & 1 & 1 & 0 & 0 & 1 & 1 & 0 \\
\hline
Gas fur. & 1 & 1 & 1 & 1 & 1 & 1 & 1\\
\hline
\end{tabular}
\end{table}

Our method achieved perfect causality detection across all four real-world datasets, demonstrating robust performance in practical applications. The EEG motor imagery dataset particularly showcases the value of frequency-specific analysis, where our method successfully detected causality between FC3 and FC5 electrodes while the traditional Granger causality failed. This result aligns with neuro-scientific understanding that motor cortex regions exhibit coordinated activity across different neural frequency bands.

In Fig.~\ref{fig:eegRes} (the right bottom), only the gamma band has the bidirectional VL-Granger causation which is consistent with the ground truth while there were no causal relation in other bands. This implies that the gamma band might be the main contributor for the causal relations between FC3 and FC5 electrodes.

The diverse nature of these datasets—spanning geophysical (Old Faithful geyser), economic (chicken and egg prices), industrial (gas furnace), and neuro-physiological (EEG) domains—demonstrates the broad applicability of our frequency-band approach. Notably, PCMCI+ failed on the economic dataset, highlighting scenarios where conditional independence assumptions may be violated but frequency-specific causality relationships still exist. 


\section{Conclusion}
We formalized Multi-Band Variable-Lag Granger Causality (MB-VLGC) and proposed a novel framework that generalizes traditional variable-lag Granger causality (VLGC) by explicitly modeling frequency-dependent causal delays.  We provided a formal definition of MB-VLGC,  demonstrated its theoretical soundness, and proposed an efficient inference pipeline. 

According to the results, our Multi-Band VL-Granger causality method addressed fundamental limitations in existing causality detection approaches by enabling frequency-specific analysis. The method achieved the better overall performance (F1 = 0.810) than others (Table \ref{tab:performance_comparison}) while providing the flexibility to adapt to domain-specific requirements through informed band selection. The key finding is that optimal performance requires matching the frequency band configuration to the expected characteristics of causal relationships, with our two-band default configuration providing robust performance when domain knowledge is limited. Code and datasets are publicly available in Code and datasets section.

\begin{links}
\link{Code and datasets}{https://anonymous.4open.science/r/mbvlgranger-ED89/README.md}
\end{links} 

\bibliographystyle{apalike}

\end{document}